\theoremstyle{plain}
\newtheorem{proposition}{Proposition}
\theoremstyle{definition}
\newtheorem{definition}{Definition}
\newtheorem{assumption}{Assumption}
\newtheorem{remark}{Remark}
\newcommand{\real}[1]{\mathbb{R}^{#1}}
\title{\LARGE \bf
Adaptive Input Estimation in Linear Dynamical Systems with Applications to Learning-from-Observations
}
\author{Sebastian Curi, Kfir. Y. Levy and Andreas Krause
\thanks{Sebastian Curi and Andreas Krause are with the Learning \& Adaptive Systems Group, Department of Computer Science, ETH Zurich, Switzerland. Kfir. Y. Levy is with the Electrical Engineering Department, Technion-Israel Institute of Technology.
        Emails: {\tt\small scuri@inf.ethz.ch, kfirylevy@technion.ac.il and krausea@ethz.ch}.%
}}
\begin{document}

\maketitle
\thispagestyle{empty}
\pagestyle{empty}

\begin{abstract}

We address the problem of estimating the inputs of a dynamical system from measurements of the system's outputs. 
To this end, we introduce a novel estimation algorithm that explicitly trades off bias and variance to optimally reduce the overall estimation error.  
This optimal trade-off is done efficiently and adaptively in every time step. 
Experimentally, we show that our method often produces estimates with substantially lower error compared to the state-of-the-art.
Finally, we consider the more complex \emph{Learning-from-Observations} framework, where an agent should learn a controller from the outputs of an expert's demonstration. 
We incorporate our estimation algorithm as a building block inside this framework and show that it enables learning controllers successfully.

\end{abstract}

\section{INTRODUCTION}

Input estimation consists of finding a sequence of inputs that produced a sequence of outputs for a given dynamical system.
It has found many application in practice, such as fault detection and isolation \cite{chen1996optimal}, vehicle tracking \cite{chan1979kalman}, sensor bias estimation \cite{park2004dynamic}, economics \cite{li2013state}, and geophysics \cite{kitanidis1987unbiased}. 

Recently, a similar problem received attention in the Imitation Learning community. 
The \emph{Learning-from-Observations} (LfO) problem, introduced in \cite{liu2017imitation}, consists of learning a controller from the outputs of a system when an expert controls it. 
It is more challenging than standard imitation learning methods that allow access to input-output pairs \cite{hussein2017imitation}. 

The motivation of the LfO framework is to learn controllers from everyday data instead of controlled experiments.
For example, imitation learning of self-driving cars requires an expert to drive a car where both observations and driver inputs are recorded. 
On the other hand, there exists a lot of video footage of cars driven by drivers in daily life (e.g., from surveillance cameras) where no direct input is available. 
The LfO framework is a natural way of modeling this problem. 

As one can imagine, an approach to solving the LfO problem uses \emph{input estimation} methods to yield estimates of the actual inputs \cite{torabi2018behavioral,edwards2018imitating}. 
Then, it uses (recorded observations, \emph{estimated} inputs) pairs inside standard imitation learning methods.
The quality of the controllers learned this way crucially depends on the quality of the input estimates.
Thus, in this work, we focus on the input estimation problem.
Concretely, we consider the setting where a teacher controls a given dynamical system.
A learner has access to measurements of the system's outputs, and her goal is to estimate the teacher's inputs from these measurements. 

\textbf{Our Contributions:}
We design a novel and efficient algorithm called Adaptive Linear Input Estimator (AdaL-IE) for the input estimation problem.
Our algorithm predicts the input at each time step in a manner that optimally trades off bias and variance, to yield low error estimates.
Moreover, our method provides bounds on the estimation error. 
In contrast to other approaches, our method does not require any prior knowledge about the process or measurement noise, such as Gaussianity, but only requires a bound on the magnitude (or the variance) of these noise terms.
Instead, we exploit the structure on the input signal, which we assume to be Lipschitz continuous. 
Although stringent at first, the low-pass characteristics of actuators make this assumption to hold in practice. 
Furthermore, we do not need to know the Lipschitz constant nor the noise bound, but just the Lipschitz-to-Noise ratio that has the same interpretation as the conventional Signal-to-Noise ratio. 
In the case that this ratio is unknown, a simple cross-validation scheme can be used to find this ratio. 
We also quantify the effect of system uncertainty on our predictions, and we show how to extend our approach to address non-linear systems.
Finally, we apply our method within the LfO framework to an inverted pendulum. 
We show that it enables learning stabilizing controllers while the state-of-the-art input estimation often fails to stabilize the system. 

\subsection{Related Work}
Input observability, as well as conditions on the system that ensure this property, were introduced in \cite{hou1998input}. 
However, the authors did not provide any practical method to estimate the inputs. 
Later, practical methods were designed based on Kalman filtering techniques using joint or two-step state-input estimation. 
One such method, which is considered to be the \emph{state-of-the-art} input estimator, is the Unbiased Minimum Variance Input Estimator (UMV-IE) \cite{gillijns2007unbiased}. 
As the name suggests, this method ensures unbiased input estimates. 
However, UMV-IE often suffers from high variance, is only optimal under particular cases, such as Gaussian noise and linear dynamical systems, and may become unstable if not tuned appropriately \cite{hsieh2014implementation}.
These claims also apply to other methods based on Kalman filters ~\cite{hsieh2010optimality,yong2013unified}. 

An alternative to Kalman filtering was presented in \cite{park2000estimation}, where authors invert the system and truncate the impulse response to the first $M$ components to find the "inverse" dynamical system by making the strong assumption that the unknown input sequence is generated by a random walk and the common assumptions that the system and initial state distribution, as well as the mean and covariances of the measurement and process noise, are known
The resulting estimator is a \emph{fixed} weighted linear combination of the past $M$ measurements. 
As shown in this paper, the choice of $M$ is crucial for the performance of their estimator. 
Nevertheless, the authors do not discuss how to choose $M$ appropriately.

In \cite{liu2017imitation}, authors introduced the LfO framework and solved it using an off-the-shelf reinforcement learning algorithm where the per-step cost function is the distance between the observations of the expert and the learned trajectories.  
This work suffers from high sample complexity, hence in \cite{torabi2018behavioral} and \cite{edwards2018imitating}, authors extended this framework to behavioral cloning imitation learning \cite{bain1999framework}.
In these works, authors consider discrete states and actions on a Markov Decision Process (MDP). 
The algorithm they propose iterates between learning the inverse model of the MDP around the expert trajectory and filtering the observed states with the learned (inverse) model to estimate the inputs. 
First, the learner applies the estimated input to the real system and updates the model. 
Second, the learner uses the model to re-estimate the input.  

In contrast to Kalman based input estimators, our approach does not enforce unbiased estimates. 
Thus, we obtain a lower estimation error, which is crucial for the LfO problem.
Furthermore, while the Kalman based estimators optimality crucially relies on the Gaussianity assumption, our method enables us to handle much more general noise terms.
Compared to the truncated inverse method, we do not assume that the input is generated with a random walk, but rather that the input is Lipschitz continuous. 
Moreover, we explicitly select the window width with a principled optimization procedure that trades off bias with variance. 
For the LfO problem, our method may be used as a building block that allows generalizing discrete to continuous state and actions.

Finally, we comment that all of the existing works on input estimation assume the knowledge of the system dynamics. 
This assumption is necessary since the case where both the system and inputs are unknown is an ill-posed problem.
In this context, we show how a mismatch between our model and the \emph{true} system introduces extra bias in our estimation.  

\section{BACKGROUND AND PROBLEM STATEMENT} \label{sec:background}
\subsection{Mathematical Notation}  
We denote a vector at time $t$ by $v_t \in \real{n}$, a sequence of vectors by $\{ v_{t_1}, v_{t_1+1}, \ldots, v_{t_2-1}, v_{t_2} \} = \{v_\tau \}_{\tau=t_1}^{t_2}$, and its $i$-th entry by $v_t[i]\in \real{}$. 
The Moore-Penrose pseudo-inverse of a matrix $A \in \real{n\times m}$ is $A^\dagger =  (A^\top A)^{-1} A^\top$.
We use $\| \cdot \|$ for the 2-norm of a vector or the induced 2-norm of a matrix. 
We denote the identity matrix of size $n$ by $I_n$ and the delta dirac function by $\delta(\cdot)$. 

\subsection{Input Estimation in Linear Dynamical Systems} \label{sec:DynamicSystem}
A time-varying discrete-time linear dynamical system \cite{dahleh2004lectures} can be described with a state $x_t \in \real{n_x}$, and the input $u_t \in \real{n_u}$ to output $y_t \in \real{n_y}$ relationship evolves as:
\begin{equation}
  \left\{ \begin{aligned}
  x_{t+1} &= A_t x_{t} + B_t u_{t} + \epsilon_t \\
  y_{t} &= C_t x_{t} + \nu_t,\end{aligned} \right. \label{eq:linsys}
\end{equation}
where $\epsilon_t \in \real{n_x}$ is process noise and $\nu_t \in \real{n_y}$ is measurement noise. 
The state transition matrix is defined as: $\Phi_{(t, i)} \equiv A_{t-1} \Phi_{(t-1, i)}$, for $0 \leq i < t$, and $\Phi_{(t, t)} = I$.
The output of the system at time $t$ is expressed as: 
\begin{equation}
y_t = C_t\left(\Phi_{(t, 0)} x_0 + \sum_{i=0}^{t-1} \Phi_{(t, i+1)} (B_i u_i+\epsilon_i)\right) + \nu_t. \label{eq:output}
\end{equation}

\begin{definition}[Strong Observability \cite{yong2013unified}] \label{def:strong_observability}
The linear system \eqref{eq:linsys} is considered \emph{strongly observable}, or equivalently state and input observable, if the initial condition $x_0$ and the unknown input sequence $\{u_\tau \}_{\tau=0}^{r-1}$ can be uniquely determined from the measured output sequence $\{y_\tau \}_{\tau=0}^r$, for a large enough number of observations. 
\end{definition}

\begin{assumption}[System Knowledge] \label{assum:system_knowledge}
The system dynamics is known and it is strongly observable. 
Furthermore, we have an unbiased estimate $\hat{x}_0$ of the initial state, $x_0 \equiv \hat{x}_0 + \epsilon_0$.
\end{assumption}
\begin{remark}
System knowledge could be acquired with physical modeling or system identification. 
Input observability is needed for well-possessedness of the input estimation problem. 
By Definition ~\ref{def:strong_observability}, the initial state $x_0$ can be reconstructed after $r$ measurements hence the initial state knowledge is a reasonable assumption in a strongly observable system.
\end{remark}

\begin{assumption}[Input Functions] \label{assum:input_functions}
We will consider the inputs $u_{t} = f(t, y_t)$ to be an unknown function of the time step $t$ and the output $y_t$. 
This models both open-loop and closed-loop behavior. 
Furthermore, the input function is Lipschitz continuous: 
For any $(t_1, t_2) \in \real{} $ it holds that $ \left\| u_{t_1} - u_{t_2} \right\| = \|f(t_1, y_{t_1}) - f(t_2, y_{t_2}) \| \leq L d(t_1, y_{t_1}; t_2, y_{t_2})$, where the distance function $d(\cdot;\cdot)$ is known in advance.
\end{assumption}
\begin{remark}
Assumption~\ref{assum:input_functions} is stringent but the low-pass characteristic of actuators makes this assumption to hold in practice. 
It is also less restrictive than random-walk assumptions done in previous work. 
The distance function is problem dependent and key for the success of this method. 
In this work, we use $d(t_1, y_{t_1}; t_2, y_{t_2}) = |t_2 - t_1| + \|y_{t_2} - y_{t_1}\|_2 $. 
\end{remark}

\begin{assumption}[System Noise] \label{assum:noise}
The process $\epsilon_t$ and measurement $\nu_t$ noise terms are zero-mean and bounded by $b$. 
The noise sequence is i.i.d.~and uncorrelated, $\mathbb{E}[\epsilon_t \epsilon_{t'}] = \mathbb{E}[\nu_t \nu_{t'}] = 0 $, for all $t \neq t'$, and $\mathbb{E}[\epsilon_t \nu_{t'}] = 0$, for all $t, t'$.
\end{assumption}
\begin{remark}
We need Assumption~\ref{assum:noise} to apply Hoeffding's inequality when deriving an upper bound of the estimation error. 
Alternatively, we could assume that the variance of the noise is bounded and apply Bernstein inequalities -- the result and analysis remain qualitatively similar. 
If the noise is not zero-mean, then the mean is included in the input estimation.
\end{remark}

\begin{remark}
Without loss of generality one can consider that $B_t = I_{n_u}$ and solve the input estimation problem for the input $\tilde{u}_t = B_t u_t$. 
The input is estimated as $\hat{u}_t = B_t^\dagger \hat{\tilde{u}}_t$. 
Furthermore, if the original system is observable, the modified system is strongly observable and $\tilde{u}_t$ can be estimated in any observable system.
\end{remark}

\section{ADAPTIVE LINEAR INPUT ESTIMATOR} \label{sec:input_estimator}

\subsection{Estimator Architecture}
The unbiased, minimum variance input estimator (UMV-IE) has a recursive and linear architecture that ensures unbiasedness \cite[Theorem 1]{gillijns2007unbiased}. 
To have a lower estimation error, we allow our estimates to break the unbiasedness constraint and propose for each timestep $t$, the following anti-causal finite impulse response estimation architecture:
\begin{equation}
    \hat{u}_t = \sum_{\tau=1}^{T} \alpha_t[\tau] \left(C_{\tau} H_\tau \right)^\dagger \left(y_\tau - C_\tau \Phi_{(\tau, 0)} \hat{x}_0\right), \label{eq:estimator}
\end{equation}
where $C_\tau H_\tau \equiv \sum_{i=0}^{\tau-1} C_\tau \Phi_{(\tau, i+1)}$ is the $\tau$-row sum of the invertibility matrix $\mathcal{I}_T$, and the filter parameters $\alpha_t$ are defined over the simplex, $\Delta_T = \{ \alpha_t \in \real{T} | \sum_{\tau=1}^T \alpha_t[\tau] = 1 \text{ and } \alpha_t[\tau] \geq 0,\;\forall \tau \in \{1, 2, \ldots, T\} \}$. 
By Assumption ~\ref{assum:system_knowledge}, the product $ \left(C_{\tau} H_\tau \right)^\dagger  \left(C_{\tau} H_\tau \right) = I$, else $\mathcal{I}_T$ would be rank deficient and the system not strongly observable. Furthermore, the inverse of $H_\tau$ exists and is unique. 

Next, we give some intuition behind the structure of the estimators in \eqref{eq:estimator}. 
For a constant input $u$ and when no noise is present, $u = \left(C_{\tau} H_\tau \right)^\dagger \left(y_\tau - C_\tau \Phi_{(\tau, 0)} \hat{x}_0\right)$, $\forall \tau$. 
When the input is constant, but noise is present, then averaging the different $\tau$-indexes with weights $\alpha_t \in \Delta_T$ concentrates the different noise components to average it out. 
However, when the input is not constant, and noise is present, then averaging the different $\tau$-indexes with weights $\alpha_t \in \Delta_T$ will average out the noise, but will also introduce a bias in the estimation, as discussed in the next subsection. 
The challenge resides in choosing the weights $\alpha_t \in \Delta_T$ to optimally trade-off the noise variance with the estimation bias. 

\subsection{Optimization Algorithm}
Given the estimator architecture, our goal is to find the best coefficients $\alpha_t$, that minimize the squared 2-norm of the estimation error
\begin{equation}
  \min_{\alpha_t \in \Delta_T} \left \| \sum_{\tau=1}^{T} \alpha_t[\tau] \left(C_{\tau} H_\tau \right)^\dagger \left(y_\tau - C_{\tau} \Phi_{(\tau, 0)} \hat{x}_0\right) - u_t \right \|^2.  \label{eq:min_error} 
\end{equation}
    
Although this problem is convex on $\alpha_t$ and looks like a typical regression problem, we do not have the targets $u_t$. 
To bypass this problem, we build on the analysis by \cite{anava2016k} and decompose our objective in \eqref{eq:min_error} into bias and variance terms to find a relaxed objective that is tractable to optimize.

\begin{proposition} \label{prop:relaxation}
  The optimization problem in \eqref{eq:min_error} can be upper bounded with probability at least $1-\beta$ by
  \begin{equation}
  2V(\beta) \min_{\alpha_t \in \Delta_T}  \alpha_t^\top \left( Q + \frac{L^2}{V(\beta)} q_t q_t^\top \right) \alpha_t, \label{eq:relaxed_optimization} 
  \end{equation}
  where $Q \in \real{T \times T}$ is a positive definite matrix that depends on the system dynamics, $q_t \in \real{T}$ depends on the system dynamics and the distance metric, $V(\beta) \in \real{}_{>0}$ depends on the noise magnitude and the probability level $\beta$, and $L \in \real{}_{\geq 0}$ is the input function Lipschitz constant. 
\end{proposition}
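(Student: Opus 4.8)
The plan is to substitute the measurement model \eqref{eq:output} into each summand of the estimator \eqref{eq:estimator} and then split the resulting error into a \emph{bias} term, driven by the variation of the input, and a \emph{noise} term, driven by the process and measurement noise. First I would set, for each index $\tau$, the quantity $z_\tau \equiv (C_\tau H_\tau)^\dagger (y_\tau - C_\tau \Phi_{(\tau,0)}\hat{x}_0)$ so that $\hat{u}_t = \sum_{\tau=1}^T \alpha_t[\tau] z_\tau$. Using \eqref{eq:output} together with $x_0 = \hat{x}_0 + \epsilon_0$ (Assumption~\ref{assum:system_knowledge}) and the definition $C_\tau H_\tau = \sum_{i=0}^{\tau-1} C_\tau \Phi_{(\tau,i+1)}$, the deterministic part of $y_\tau - C_\tau\Phi_{(\tau,0)}\hat{x}_0$ equals $\sum_{i=0}^{\tau-1} C_\tau \Phi_{(\tau,i+1)} u_i$, which I would rewrite as $C_\tau H_\tau u_t + \sum_{i=0}^{\tau-1} C_\tau\Phi_{(\tau,i+1)}(u_i - u_t)$. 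Applying the identity $(C_\tau H_\tau)^\dagger (C_\tau H_\tau) = I$ then yields the clean decomposition $z_\tau = u_t + b_\tau + n_\tau$, where $b_\tau = (C_\tau H_\tau)^\dagger \sum_{i=0}^{\tau-1} C_\tau\Phi_{(\tau,i+1)}(u_i - u_t)$ is a bias contribution and $n_\tau = (C_\tau H_\tau)^\dagger(\text{aggregated noise at }\tau)$ is a linear image of the independent noise terms $\{\epsilon_i\}$ and $\nu_\tau$.

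Because $\alpha_t$ lies on the simplex, $\sum_\tau \alpha_t[\tau] = 1$, so the leading $u_t$ cancels the target and the error becomes $\hat{u}_t - u_t = \sum_\tau \alpha_t[\tau] b_\tau + \sum_\tau \alpha_t[\tau] n_\tau$. I would then apply $\|a+b\|^2 \le 2\|a\|^2 + 2\|b\|^2$, which is exactly what produces the global factor $2$ in \eqref{eq:relaxed_optimization}, reducing the objective \eqref{eq:min_error} to $2\|\sum_\tau \alpha_t[\tau] b_\tau\|^2 + 2\|\sum_\tau \alpha_t[\tau] n_\tau\|^2$.

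For the bias, using $\alpha_t[\tau] \ge 0$ the triangle inequality gives $\|\sum_\tau \alpha_t[\tau] b_\tau\| \le \sum_\tau \alpha_t[\tau]\|b_\tau\|$, and sub-multiplicativity together with the Lipschitz bound $\|u_i - u_t\| \le L\, d(i,y_i;t,y_t)$ of Assumption~\ref{assum:input_functions} gives $\|b_\tau\| \le L\, q_t[\tau]$, where $q_t[\tau] \equiv \|(C_\tau H_\tau)^\dagger\| \sum_{i=0}^{\tau-1} \|C_\tau\Phi_{(\tau,i+1)}\|\, d(i,y_i;t,y_t)$ depends only on the known dynamics and distances. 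Squaring yields $\|\sum_\tau \alpha_t[\tau] b_\tau\|^2 \le L^2 (q_t^\top \alpha_t)^2 = L^2 \alpha_t^\top q_t q_t^\top \alpha_t$, supplying the rank-one term. For the noise, I would regroup $\sum_\tau \alpha_t[\tau] n_\tau$ by independent source: each $\epsilon_i$ enters every $z_\tau$ with $\tau>i$, while each $\nu_\tau$ enters only $z_\tau$, so the sum is a linear combination of independent, zero-mean, bounded vectors (Assumption~\ref{assum:noise}) whose coefficient matrices are linear in $\alpha_t$. A vector-valued Hoeffding inequality then bounds $\|\sum_\tau \alpha_t[\tau] n_\tau\|^2$, with probability at least $1-\beta$, by $V(\beta)\,\alpha_t^\top Q \alpha_t$, where $Q$ is the deterministic matrix encoding how the noise propagates through the indices (positive definite because each index carries its own independent $\nu_\tau$ and strong observability makes every $(C_\tau H_\tau)^\dagger$ full-column-rank), and $V(\beta)$ collects the noise bound $b$ and the $\log(1/\beta)$ confidence factor.

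Combining the two bounds gives $\|\hat{u}_t - u_t\|^2 \le 2V(\beta)\alpha_t^\top Q\alpha_t + 2L^2\alpha_t^\top q_t q_t^\top \alpha_t = 2V(\beta)\,\alpha_t^\top\big(Q + \tfrac{L^2}{V(\beta)} q_t q_t^\top\big)\alpha_t$, and minimizing both sides over $\alpha_t \in \Delta_T$ yields \eqref{eq:relaxed_optimization}. I expect the main obstacle to be the noise step: since the same noise realization is shared across all indices $\tau$, one must carefully regroup by independent source before invoking the concentration inequality, and then argue that the high-probability bound takes the precise quadratic form $V(\beta)\alpha_t^\top Q\alpha_t$ with $Q$ noise-independent and positive definite while all dependence on $b$ and $\beta$ is isolated in the scalar $V(\beta)$. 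A secondary subtlety is that the final comparison is between the two minima, so the concentration bound must be invoked at the (data-dependent) minimizer of the right-hand side; this is handled by evaluating the per-$\alpha$ bound at that minimizer and using that the left-hand minimum is no larger. The bias step, by contrast, is routine.
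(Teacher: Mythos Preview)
Your proposal is correct and follows essentially the same route as the paper: the same bias--variance split via $\|a+b\|^2\le 2\|a\|^2+2\|b\|^2$, the same triangle-inequality-plus-Lipschitz bound for the bias to produce the rank-one term $L^2\alpha_t^\top q_tq_t^\top\alpha_t$, and the same regrouping-by-independent-noise-source followed by a vector Hoeffding inequality to obtain $V(\beta)\alpha_t^\top Q\alpha_t$. The only cosmetic differences are that the paper simplifies $(C_\tau H_\tau)^\dagger C_\tau$ to $H_\tau^{-1}$ throughout and keeps $\|H_\tau^{-1}\Phi_{(\tau,i+1)}\|$ inside the sum when defining $q_t[\tau]$ rather than pulling out a single $\|(C_\tau H_\tau)^\dagger\|$ factor, and it gives the explicit constants $V(\beta)=36b^2(1+\sqrt{\log\beta^{-1}})^2$ and $Q[\tau,\tau']=\sum_{i=0}^{\min\{\tau,\tau'\}}\|H_\tau^{-1}\Phi(\tau,i)\|\|H_{\tau'}^{-1}\Phi(\tau',i)\|+\|H_\tau^{-1}\|^2\delta(\tau-\tau')$; your observation about the data-dependence of the minimizer is a subtlety the paper does not discuss.
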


\begin{proof}
  The error can be upper bounded using in bias and variance terms:
  \begin{align*}
    \text{error}^2 =& \left \| \sum_{\tau=1}^{T} \alpha_t[\tau] \left(C_{\tau} H_\tau \right)^\dagger \left(y_\tau - C_{\tau} \Phi_{(\tau, 0)} \hat{x}_0\right) - u_t \right \|^2 \\
    \leq& \underbrace{2 \left\| \sum_{\tau=1}^{T} \alpha_t[\tau] H_\tau^{-1}\left(\sum_{i=0}^{\tau} \Phi_{(\tau, i)} \epsilon_i + \nu_\tau \right) \right\|^2}_{\text{variance}} \\
    & +  \underbrace{2 \left\| \sum_{\tau=1}^{T} \alpha_t[\tau] H_\tau^{-1}\left( \sum_{i=0}^{\tau-1} \Phi_{(\tau, i+1)} (u_i- u_t) \right) \right\|^2}_{\text{bias}^2}.
  \end{align*}
  The variance term only depends on the process and measurement noise, while the bias is present even in the noiseless case. 
  Using Assumption~\ref{assum:input_functions}, we bound the bias term:
  \begin{align*}
    \text{bias} &= \left\| \sum_{\tau=1}^{T} \alpha_t[\tau] H_\tau^{-1}\left( \sum_{i=0}^{\tau-1} \Phi_{(\tau, i+1)} (u_i- u_t) \right) \right\| \\
    & \leq \sum_{\tau=1}^{T} |\alpha_t[\tau]| \left( \sum_{i=0}^{\tau-1} \|H_\tau^{-1} \Phi_{(\tau, i+1)} \| \|(u_i- u_t)\| \right) \\ 
    & \leq L \sum_{\tau=1}^{T} \alpha_t[\tau] \left( \sum_{i=0}^{\tau-1} d(i, t) \|H_\tau^{-1} \Phi_{(\tau, i+1)} \| \right) = L \alpha_t^\top q_t.
  \end{align*} 
  where $q_t[\tau] =  \left( \sum_{i=0}^{\tau-1} d(i, t) \|H_\tau^{-1} \Phi_{(\tau, i+1)} \| \right)$. Using Hoeffding's inequality for matrix concentrations \cite{juditsky2008large,kakade2010matrixConc}, we bound the variance term with probability at least $(1-\beta)$:
  \begin{align*}
    &\text{var} = \left\| \sum_{\tau=1}^{T} \alpha_t[\tau] H_\tau^{-1}\left(\sum_{i=0}^{\tau} \Phi_{(\tau, i)} \epsilon_i + \nu_t \right) \right\|^2 \\
    &= \left\| \sum_{i=0}^{T} \sum_{\tau = \max\{1, i\}}^{T} \alpha_t[\tau] H_\tau^{-1} \Phi_{(\tau, i)} \epsilon_i + \sum_{\tau=1}^{T} \alpha_t[\tau] H_\tau^{-1} \nu_t \right\|^2 \\
    &\leq 36b^2\left(1 + \sqrt{\log(\beta^{-1})}\right)^2 \sum_{i=0}^{2T} M_i^2 \leq V(\beta) \alpha_t^\top Q \alpha_t,
  \end{align*}
  where $M_i \geq ||\sum_{\tau = \max\{1, i\}}^{T} \alpha_t[\tau] H_\tau^{-1} \Phi_{(\tau, i)}||$ for $i \in \{0, \ldots, T \}$ and  $M_i \geq ||\alpha_t[i-T] H_{i-T}^{-1}||$ for $i \in \{T+1, \ldots, 2T\}$. Defining $V \equiv 36b^2 \left(1 + \sqrt{\log(\beta^{-1})}\right)^2$ the quadratic form with $Q[\tau, \tau'] = \sum_{i=0}^{\min \{\tau, \tau'\}}  \| H_\tau^{-1} \Phi(\tau,i) \| \| H_{\tau'}^{-1} \Phi(\tau',i)\| + \left\| H_\tau^{-1} \right\|^2 \delta(\tau - \tau')$ appears. 
  The rest of the proof is simple algebra. 
\end{proof}  

The relaxed optimization problem \eqref{eq:relaxed_optimization} is a quadratic program over the simplex, which can be solved efficiently via projected gradient descent \cite{boyd2004convex,duchi2008efficient}. 
The solution of \eqref{eq:relaxed_optimization} are the $T$ parameters of the estimator.  
Furthermore, the estimator has only one hyperparameter, the Lipschitz-to-Noise ratio $\frac{L^2}{V(\beta)}$, which can be tuned by cross-validation. 
Finally, the projection onto the simplex of the $T$ parameters implicitly regularize them.  
  
\subsection{Extensions to Uncertain and Non-Linear Systems} \label{sec:extensions}
\textbf{Uncertain Systems:}
For simplicity, we consider $B_t=C_t=I_n$. If the state transition matrix estimate has an error as $A_t= \hat{A}_t + \Delta_{A_t}$, the dynamics can be expressed as:
\begin{equation*}
    \left\{ \begin{aligned}
      x_{t+1} &= A_t x_{t} + u_{t} + \epsilon_t = \hat{A}_t x_{t} + \left(\Delta_{A_t} x_t + u_t\right) + \epsilon_t \\
      y_{t} &= x_{t} + \nu_t,
    \end{aligned} \right. 
\end{equation*}
We call $\tilde{u}_t = \Delta_{A_t} x_t + u_t$ and use AdaL-IE to recover $\tilde{u}_t$. 
The Lipschitz constant of $\tilde{u}$ is $\tilde{L} = \| \Delta_{A_t}  \| + L$. 
If the mismatch is large, then $\|\Delta_{A_t}\|$ dominates the signal information of $u$, the upper bound of the bias is loose, and the estimate poor.

\textbf{Initial State Knowledge:}
If the initial state of the system is not known, then the estimation will have an extra bias of $\sum_{\tau=1}^{T} \alpha_t[\tau] \left(C_{\tau} H_\tau \right)^\dagger \Phi_{(\tau, 0)} \hat{x}_0$. 
For stable systems, this term is significant at times $t$ close to 0. 
For unstable systems, this term can be the prominent source of error for larger times. 

\textbf{Non-Linear Dynamical Systems:}
We consider a linearization of a class of non-linear systems that include most robotics systems \cite{theodorou2010generalized}:
\begin{equation*}
    \left\{ \begin{aligned}
    x_{t+1} &= g(x_t) + h(x_t) u_t+ \epsilon_t \\
    y_{t} &= C_tx_{t} + \nu_t,\end{aligned} \right. \approx 
    \left\{ \begin{aligned}
    x_{t+1} &= A_t x_{t} + B_t u_{t} + \epsilon_t \\
    y_{t} &= C_t x_{t} + \nu_t,\end{aligned} \right.
\end{equation*}
where $g(x_t)$ and $h(x_t)$ are non-linearities that depend only on the current state. 
The inputs affect the next state linearly, and the observations are linear functions of the states. 
Up to small errors, the linearization around the state trajectory yields $A_t = \left. \frac{\partial g(x)}{\partial x}\right|_{x=x_t}$, $B_t =h(x_t) +  \left. \frac{\partial h(x)}{\partial x} x\right|_{x=x_t}$.

Of course, this is not a trivial task because we do not have access to the state. 
In the non-linear systems used in this work, we assumed that $C_t$ is full rank and thus it could be inverted to obtain a \emph{noisy} estimate of the state and we use this estimate to linearize the system. 
Although stringent, this assumption is used in the literature of non-linear observers with unknown inputs \cite{ha2004state,corless1998state}. 
In fact, for the problem of non-linear state estimation with unknown inputs, few results exist for particular classes of systems.

\section{EXPERIMENTS} \label{sec:experiments}
\subsection{Input Estimation} \label{ssec:experiments:input_estimation}
We perform a set of experiments to compare AdaL-IE with UMV-IE on the RMS estimation error.
We apply a set of 8 different input signals to 5 different systems for 20 episodes of 100 time-steps. 
Even though the input signals were a priori known, we fit the SNR ratio for setting up the optimization problem \eqref{eq:relaxed_optimization} via cross-validation. 
To simulate modeling mismatch, we perturb the entries of the state transition matrix of each system with Gaussian noise. 
The signals are (a) a unit step, (b) a unit sine, (c) a unit ramp, (d) a unit triangle, (e) a ramp followed by a unit step, (f) a ramp followed by a zero step, (g) a unit step followed by a zero step, and (h) a unit sine superposed to a unit step. The systems were (i) a spring-mass system, (ii) a double integrator, (iii) a email server \cite[Section 2.6.1]{hellerstein2004feedback}, (iv) a HTTP server \cite[Section 7.8.1]{hellerstein2004feedback}, (v) a stable system with random coefficients in the state transition matrix, (vi) a first order non-linear system \cite[Example 4.27]{khalil2002nonlinear}, and (vii) a second order non-linear system \cite[Example 4.61]{khalil2002nonlinear}. 

We show the input estimation of system (i) in Fig.~\ref{fig:IE_SpringMass}.   
We plot the input signal in blue, the UMV-IE estimates mean plus/minus one standard deviation in shaded green, and AdaL-IE estimates in shaded red. 
AdaL-IE has lower error than UMV-IE due to a much lower variance, but at the cost of some bias. 
The rest of the systems are summarized in with the experimental mean RMS estimation error in Table~\ref{tab:experimenterror}. 
The top sub-row shows the AdaL-IE RMS error and the bottom one the UMV-IE RMS error.
From the 56 experiments, in 50 AdaL-IE outperforms UMV-IE, while it underperforms in 3. In the remaining 3, there is no statistical difference.
    
{\vspace{-0.2em}
\begin{figure}[htpb]
\scriptsize
  \centering
  \begin{subfigure}[t]{0.5\columnwidth}
      \centering
      \includegraphics[width=\textwidth]{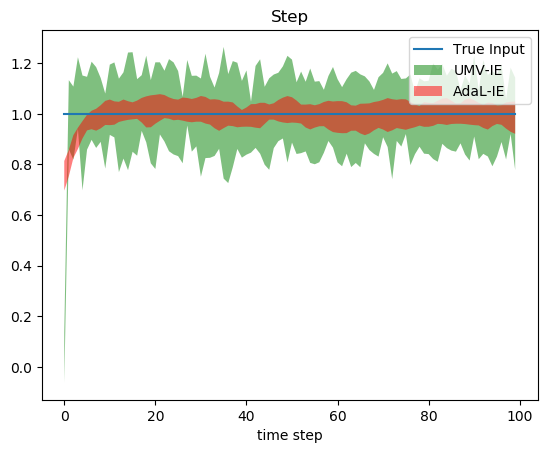}
  \end{subfigure}%
  ~ 
  \begin{subfigure}[t]{0.5\columnwidth}
      \centering
      \includegraphics[width=\textwidth]{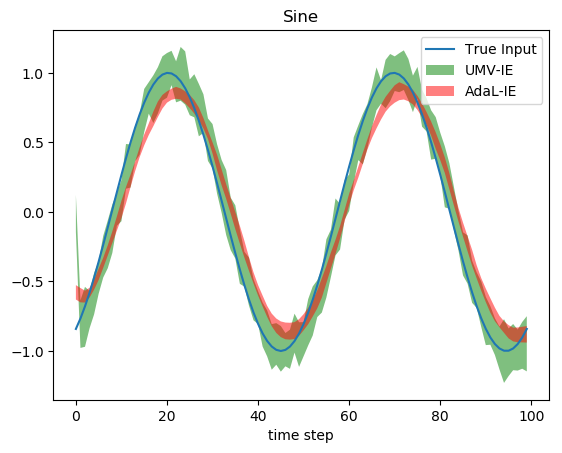}
  \end{subfigure}%
  \\
  \begin{subfigure}[t]{0.5\columnwidth}
      \centering
      \includegraphics[width=\textwidth]{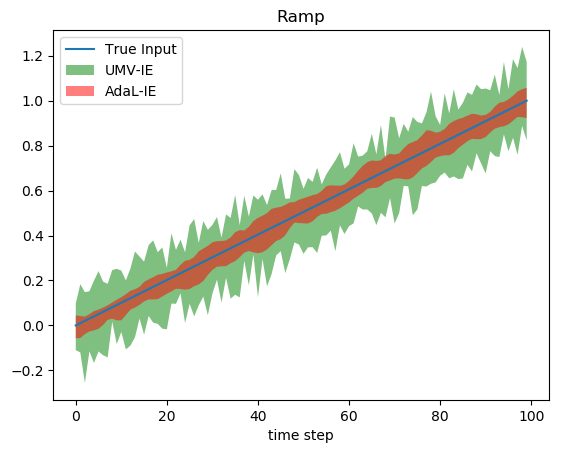}
  \end{subfigure}%
  ~ 
  \begin{subfigure}[t]{0.5\columnwidth}
      \centering
      \includegraphics[width=\textwidth]{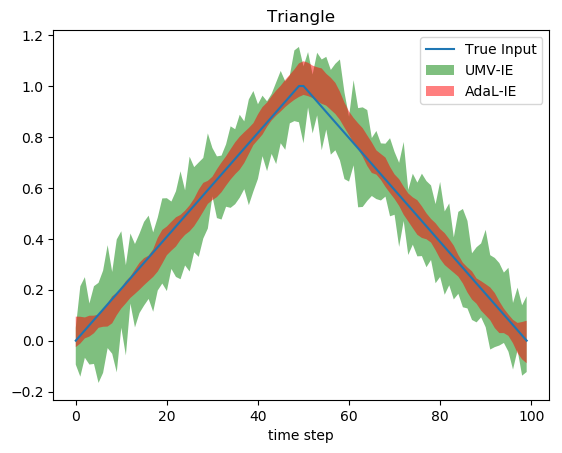}
  \end{subfigure}%
  \\
  \begin{subfigure}[t]{0.5\columnwidth}
      \centering
      \includegraphics[width=\textwidth]{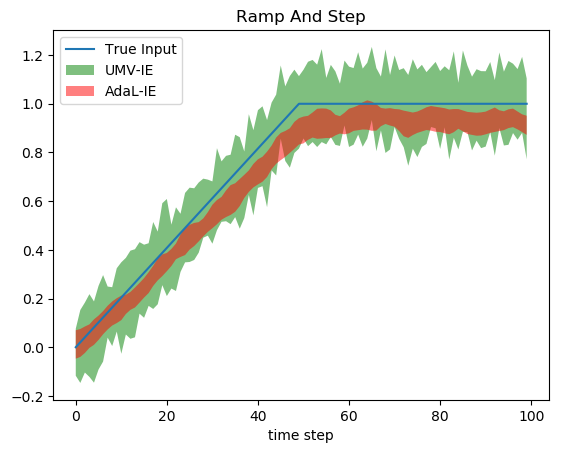}
  \end{subfigure}%
  ~ 
  \begin{subfigure}[t]{0.5\columnwidth}
      \centering
      \includegraphics[width=\textwidth]{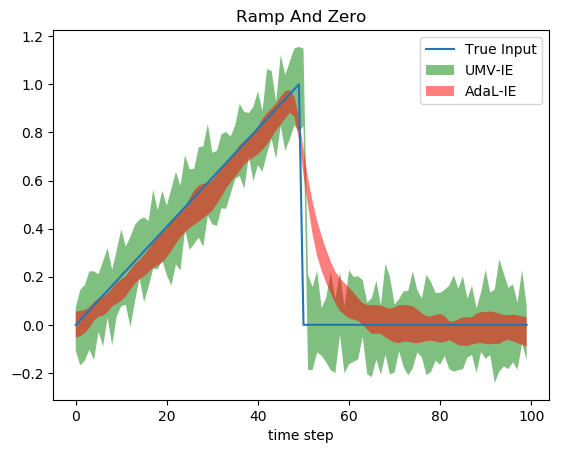}
  \end{subfigure}%
  \\
  \begin{subfigure}[t]{0.5\columnwidth}
      \centering
      \includegraphics[width=\textwidth]{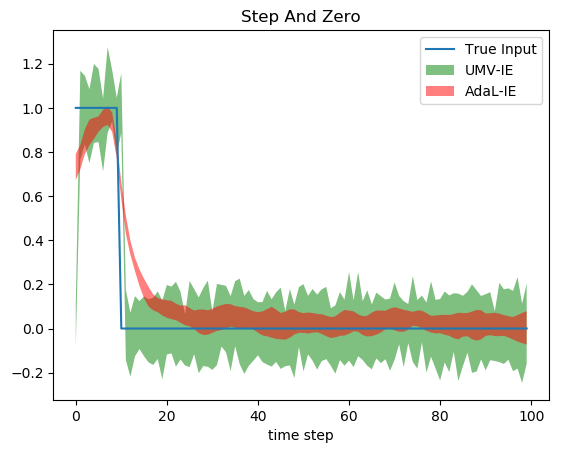}
  \end{subfigure}%
  ~ 
  \begin{subfigure}[t]{0.5\columnwidth}
      \centering
      \includegraphics[width=\textwidth]{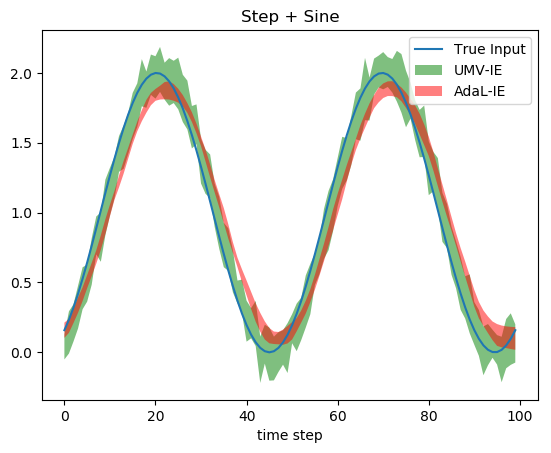}
  \end{subfigure}%
  \caption{Input Signal Estimation in the Spring-Mass System. We can observe that the variance of the AdaL-IE estimates is lower than those of UMV-IE. The trade-off comes at cost of some bias, which can also be seen in the plots. Finally, the RMS error of AdaL-IE is smaller than that of UMV-IE.} \label{fig:IE_SpringMass}
\end{figure}
\vspace{-0.4em}
}

{\setlength{\tabcolsep}{5pt}
\begin{table}[htpb]
\centering
\scriptsize 
\caption{Mean RMS Estimation Error. In each row a different input signal is applied to each system in the columns. The top sub-row has the error of the AdaL-IE estimates while the bottom sub-row the UMV-IE estimates.} \label{tab:experimenterror}
\begin{tabular}{cccccccc}
  \hline \hline
  & Spring & Double & Email & HTTP & Rand & NonLin & NonLin \\
  & -mass  & int.   & server & server & sys. & 4.27 & 4.61 \\
  \hline 
  \multirow{2}{*}{(a)}  & \bm{$0.061$}    & \bm{$0.049$}  & \bm{$0.112$}  & \bm{$0.234$}  & \bm{$0.049$}  & \bm{$0.237$}  & \bm{$0.237$}  \\
                        & $0.191$         & $0.187$       & $0.325$       & $0.885$       & $0.188$       & $1.673$       & $1.673$       \\ \hline
  \multirow{2}{*}{(b)}  & \bm{$0.544$}    & $0.194$       & \bm{$0.139$}  & \bm{$0.318$}  & $0.375$       & \bm{$0.500$}  & \bm{$0.500$}  \\
                        & $1.626$         & $0.202$       & $0.334$       & $0.887$       & \bm{$0.228$}  & $1.678$       & $1.677$       \\ \hline
  \multirow{2}{*}{(c)}  & \bm{$0.584$}    & \bm{$0.044$}  & \bm{$0.123$}  & \bm{$0.241$}  & \bm{$0.085$}  & \bm{$0.257$}  & \bm{$0.263$}  \\
                        & $1.666$         & $0.162$       & $0.317$       & $0.898$       & $0.166$       & $1.674$       & $1.677$       \\ \hline
  \multirow{2}{*}{(d)}  & \bm{$0.063$}    & \bm{$0.055$}  & \bm{$0.113$}  & \bm{$0.255$}  & \bm{$0.056$}  & \bm{$0.285$}  & \bm{$0.259$}  \\
                        & $0.162$         & $0.162$       & $0.310$       & $0.869$       & $0.161$       & $1.677$       & $1.681$       \\ \hline
  \multirow{2}{*}{(e)}  & $0.073$         & \bm{$0.049$}  & \bm{$0.111$}  & \bm{$0.239$}  & \bm{$0.062$}  & \bm{$0.261$}  & \bm{$0.266$}  \\
                        & \bm{$0.021$}    & $0.161$       & $0.312$       & $0.859$       & $0.156$       & $1.681$       & $1.645$       \\ \hline
  \multirow{2}{*}{(f)}  & \bm{$0.584$}    & \bm{$0.102$}  & \bm{$0.131$}  & \bm{$0.267$}  & \bm{$0.163$}  & \bm{$0.297$}  & \bm{$0.307$}  \\
                        & $1.640$         & $0.187$       & $0.320$       & $0.889$       & $0.221$       & $1.653$       & $1.650$       \\ \hline
  \multirow{2}{*}{(g)}  & \bm{$0.123$}    & \bm{$0.120$}  & \bm{$0.137$}  & \bm{$0.274$}  & $0.259$       & \bm{$0.290$}  & \bm{$0.294$}  \\
                        & $0.217$         & $0.217$       & $0.345$       & $0.910$       & $0.277$       & $1.651$       & $1.676$       \\ \hline
  \multirow{2}{*}{(h)}  & \bm{$0.579$}    & $0.176$       & \bm{$0.140$}  & \bm{$0.305$}  & $0.533$       & \bm{$0.533$}  & \bm{$0.515$}  \\
                        & $1.638$         & $0.183$       & $0.324$       & $0.897$       & \bm{$0.222$}  & $1.676$       & $1.701$       \\ \hline

  \hline 
\end{tabular}
\vspace{-2em}
\end{table}
}
    
When the input signal changes abruptly, such as (f) and (g), Assumption~\ref{assum:input_functions} is violated. 
In these cases, the bias around discontinuity points is large, the total error is larger than with UMV-IE, and a delay appears in the estimated input signal.  
  
\textbf{Sparsity of estimator parameters:} 
In Fig.~\ref{fig:Alphas} we show the coefficients of some of \nth{1}, \nth{50}, and \nth{98} input filters and observe that only a small number of indexes around $t$ to have non-zero entries. 
In Fig.~\ref{sfig:nonZero} we show the count of how many non-zero parameters each estimator has for all $t$. 

The bias term and the simplex geometry induces the sparsity pattern. 
The further away the indexes $\tau$ are from the estimation point $t$, the larger $q_t[\tau]$ is leading to larger values of the objective in \eqref{eq:relaxed_optimization}. The simplex geometry is the dual of $l$-1 regularization, which is known to be sparsity inducing. 
For example, for estimating $u_{50}$, only $\alpha_{50}[49:53]$ are non-zero. 
This coincides with the intuition that $u_{50}$ only affects current and future outputs, but later outputs have exponentially less information. 
However, $\alpha_{50}[49]$ is also non-zero. 
Inputs $u_{49}$ and $u_{50}$ are close due to Lipschitz continuity (Assumption~\ref{assum:input_functions}), hence $y_{49}$ also contains information about $u_{50}$. 

The sparsity pattern also suggests that this estimator can be used (almost) online. Rather than waiting for $T$ measurements, we only have to wait for 4 or 5 to estimate the input. 
    
\begin{figure}[htpb]
  \centering
  \begin{subfigure}[t]{0.5\columnwidth}
      \centering
      \includegraphics[width=\textwidth]{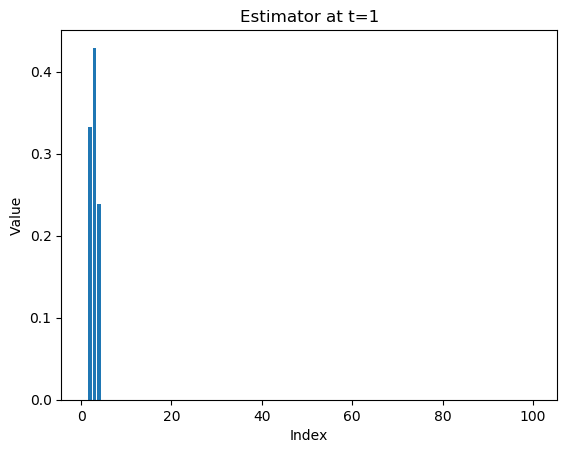}
      \caption{Parameters At $t=1$}\label{sfig:Alphas1}
  \end{subfigure}%
  ~ 
  \begin{subfigure}[t]{0.5\columnwidth}
      \centering
      \includegraphics[width=\textwidth]{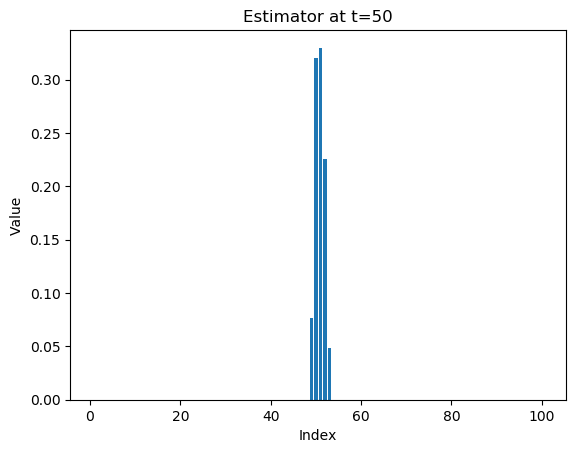}
      \caption{Parameters At $t=50$} \label{sfig:Alphas50}
  \end{subfigure}%
  ~ \\
  \begin{subfigure}[t]{0.5\columnwidth}
      \centering
      \includegraphics[width=\textwidth]{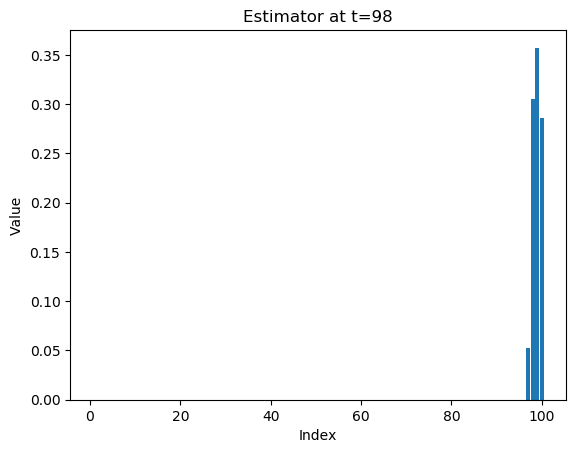}
      \caption{Parameters At $t=98$} \label{sfig:Alphas98}
  \end{subfigure}%
  ~
  \begin{subfigure}[t]{0.5\columnwidth}
      \centering
      \includegraphics[width=\textwidth]{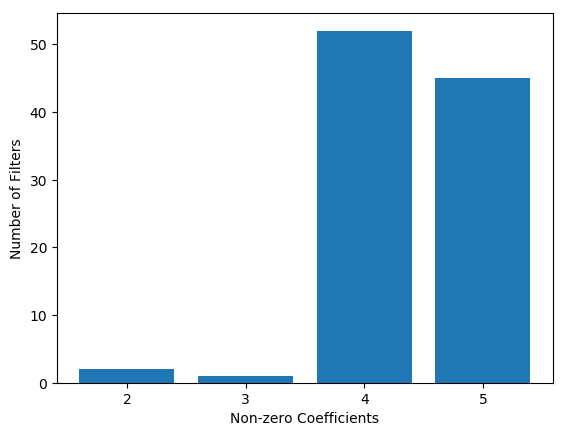}
      \caption{Non-Zero Coefficients} \label{sfig:nonZero}
  \end{subfigure}%
  \caption{Sparsity Pattern Of Estimator Parameters. At every time step, the estimator parameters have all non-zero coefficients around this time step. The inherent locality of the solution is due to the bias-variance trade-off.} \label{fig:Alphas}
  \vspace{-1em}
\end{figure}
  
\textbf{Noise Magnitude Effect:} 
To understand the trade-off between bias and variance of the estimator, we analyze the spring-mass system with different noise magnitudes. 
We show the results in Fig.~\ref{fig:Noise}. 
When the noise is small compared to the magnitude of the signal as in Fig.~\ref{sfig:NoiseSmall}, then the bias is larger than the variance reduction and UMV-IE performs better. 
On the other hand, when the noise has the same magnitude as the signal, then the bias is meaningless as shown in Fig.~\ref{sfig:NoiseLarge}, and AdaL-IE outperforms UMV-IE. 
Even for cases in which the noise is smaller than the signal, the variance reduction is considerable, as shown in Fig.~\ref{fig:IE_SpringMass}. 
\begin{figure}[htpb]
  \centering
  \begin{subfigure}[t]{0.5\columnwidth}
      \centering
      \includegraphics[width=\textwidth]{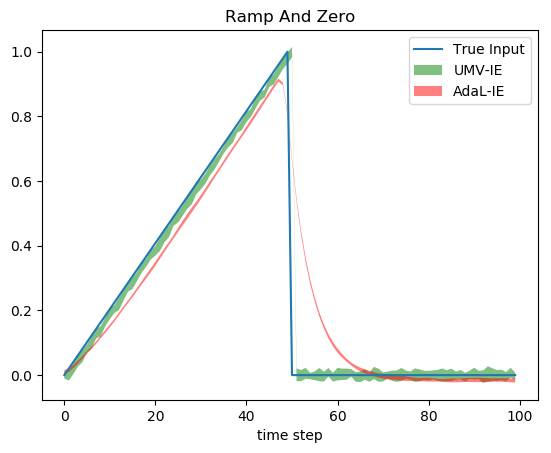}
      \caption{Noise $b=0.01$} \label{sfig:NoiseSmall}
  \end{subfigure}%
  ~ 
  \begin{subfigure}[t]{0.5\columnwidth}
      \centering
      \includegraphics[width=\textwidth]{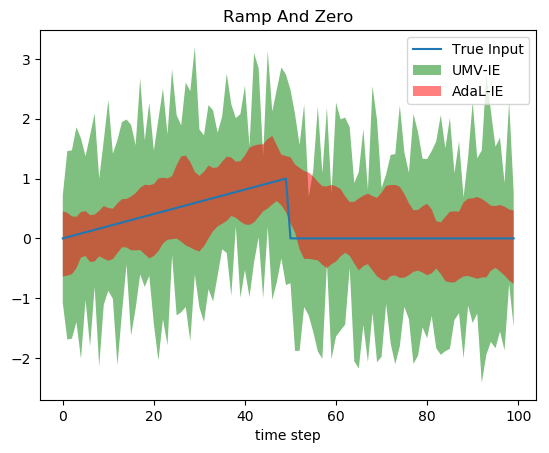}
      \caption{Noise $b=1$}  \label{sfig:NoiseLarge}  
  \end{subfigure}%
  \caption{For settings with high-noise to signal ratio, the variance reduction obtained with AdaL-IE overcomes the bias in the estimator. On the other hand, when the noise is low, UMV-IE estimates have lower errors; the variance reduction does not compensate for the bias increase.} \label{fig:Noise}
  \vspace{-2em}
\end{figure}

\subsection{Learning-from-Observations} \label{ssec:experiments:UL}
After analyzing the quality of the data that the estimator recovers, the question that we answer in this section is if this data is \textit{good enough} to learn a controller that successfully executes a given task. 
As a benchmark, we consider the swing-up and stabilization of the inverted pendulum task of the openAI gym \cite{brockman2016openai} with randomly perturbed parameters (10 \% of the value).
We generate the \textit{expert} trajectories with feedback linearizing controller, $f(\phi, \omega) = 15 \sin(\phi) + 30 \phi + 8 \omega $, where $\phi$ is the angular position and $\omega$ the angular velocity of the pendulum. 
Additively to the input created by the expert, we simulate Uniform process and measurement noise. 
From the demonstrations, only noisy measurements of the angular position and velocity are available.
A task is successful if the resulting controller stabilizes the pendulum in the upright position after 100 seconds. 
We compare three different controller architectures, a linear function, a neural network with 2 hidden layers of 5 neurons with ReLU activation layers, and a neural network with the same architecture but sigmoid activation layers. 

We split this LfO problem into two steps: input estimation and imitation learning with the estimated inputs. 
For the former, we use both UMV-IE and AdaL-IE. 
For the latter we use Behavioral Cloning \cite{bain1999framework}. More advanced algorithms, such as DAGGER \cite{ross2011reduction} or IRL \cite{abbeel2004apprenticeship}, are left for future work. 

In Table~\ref{tab:controllers} we show how many stabilizing controllers we learned this way. 
The original controller can stabilize 930 out of 1000 demonstrations because it is not good enough for the swing-up task from \emph{all} initial conditions due to actuator saturation.
The linear controller performs even worst because it does not account for non-linearities.
The sigmoid activations architecture performs better than the linear, but it can not replicate the original controller exactly due to a mixture between noisy targets, insufficient data, and approximation error.
More work is needed to understand how having more data affects the quality of the input estimation targets and its interaction with LfO. 
The ReLU activations architecture performs worst than the linear because it has very sharp boundaries around the equilibrium, and large noise signals drive the system quickly out of stability. 

To understand the difference between AdaL-IE and UMV-IE targets, we look at the closed-loop eigenvalues of the linearized system around the unstable equilibrium for the linear controller. 
When we use AdaL-IE targets, the closed-loop system is stable, while when we use UMV-IE targets, it is closed-loop unstable. 
The intuition for this is that after the swing-up phase, most of the expert trajectories are around the equilibrium. 
In this phase, the noise that drives the system out of the equilibrium and the controller input that drives the system back have about the same magnitude.
The situation is that of Fig.~\ref{sfig:NoiseLarge} and the estimates of AdaL-IE have smaller error than those of UMV-IE. 

{\setlength{\tabcolsep}{7pt}
\begin{table}[htpb]
\centering
\caption{Number of stabilizing controllers out of 1000 demonstrations. The inputs are estimated using AdaL-IE and UMV-IE respectively and used as Behavioral Cloning targets.} \label{tab:controllers}
\vskip 0.1in
\begin{tabular}{ccc}
  \hline \hline
  Controller & AdaL-IE & UMV-IE \\
  \hline 
  Linear Architecture   &  \bm{$496$} & 0\\
  ReLU Activations      &  \bm{$332$} & 0\\
  Sigmoid Activations   &  \bm{$754$} & 224\\ \hline
  Original Controller   &   \multicolumn{2}{c}{930} \\
  \hline \hline
\end{tabular}
\end{table}
\vspace{-1em}
}

\section{CONCLUSION}
In this paper, we presented a new input estimator for a general class of dynamical systems by optimizing a high-probability upper bound of the estimation error. 
This upper bound comes from a natural decomposition in bias and variance of the estimation.  
We showed in experiments that this estimator has lower errors than previous state-of-the-art (UMV-IE) methods. 
Finally, we tested this estimator on the LfO framework and successfully learned stabilizing controllers using AdaL-IE as input estimation block.

\bibliographystyle{IEEEtran}
\bibliography{root}

\end{document}